\documentclass[runningheads]{llncs}
\usepackage[T1]{fontenc}
\usepackage{graphicx}
\usepackage{amsmath}
\usepackage{amssymb}
\usepackage{bm}
\usepackage{booktabs}
\usepackage{hyperref}
\usepackage{cite}  
\usepackage{float}
\usepackage{placeins}
\usepackage{color}
\usepackage{subcaption}

\newcommand{\R}{\mathbb{R}}
\newcommand{\C}{\mathbb{C}}
\newcommand{\N}{\mathbb{N}}

\begin{document}
\title{Approximating Matrix Functions with Deep Neural Networks 
       and Transformers}
\titlerunning{Matrix Functions with DNNs and Transformers}
\author{Rahul Padmanabhan\inst{1}\orcidID{0009-0005-8660-6234} \and \\
        Simone Brugiapaglia\inst{1}\orcidID{0000-0003-1927-8232}}
\authorrunning{R. Padmanabhan and S. Brugiapaglia}
\institute{Department of Mathematics and Statistics, 
           Concordia University, \\
           1455 Blvd. De Maisonneuve Ouest, Montreal, Quebec H3G 1M8, Canada\\
           \email{\{rahul.padmanabhan,simone.brugiapaglia\}@concordia.ca}}
\maketitle
\begin{abstract}
Transformers have revolutionized natural language processing, but their 
use for numerical computation has received less attention. 
We study the approximation of matrix functions, which map scalar 
functions to matrices, using neural networks including transformers. 
We focus on functions mapping square matrices to square matrices of the same dimension.
These types of matrix functions appear throughout scientific computing, e.g., 
the matrix exponential in continuous-time Markov chains and the
matrix sign function in stability analysis of dynamical systems.
In this paper, we make two contributions. 
First, we prove bounds on the width and depth of ReLU networks needed 
to approximate the matrix exponential to an arbitrary precision. 
Second, we show experimentally that a transformer encoder-decoder with 
suitable numerical encodings can approximate certain matrix functions
at a relative error of 5\% with high probability.
Our study reveals that the encoding scheme strongly affects performance, with different 
schemes working better for different functions.

\keywords{Transformers \and 
          Matrix Functions \and 
          Deep Neural Networks \and 
          Function Approximation \and 
          Matrix Exponential \and 
          Matrix Sign Function.}
\end{abstract}

\section{Introduction}
\label{sec:introduction}

Matrix functions appear throughout scientific computing.
The matrix exponential gives transition probabilities 
in continuous-time Markov chains~\cite{ross2014introduction}, the matrix sign function 
solves algebraic Riccati equations in control theory~\cite{roberts1980linear}, 
and the matrix sine and cosine arise in second-order differential 
equations~\cite{al2015new}.
Given $A \in \C^{n \times n}$, a matrix function $f(A)$ produces 
another $n \times n$ matrix.
These functions do not correspond to element-wise operations, but
they respect the spectral structure of $A$.
Standard algorithms for computing these functions can be expensive,
(especially for large matrices),
which motivates the search for fast surrogate models.

Neural networks are natural candidates for such surrogates.
Deep learning has proven effective in vision~\cite{voulodimos2018deep} 
and language~\cite{wu2020deep,otter2020survey}, and recent work has begun 
applying neural networks to mathematical tasks.
Transformers~\cite{vaswani2017attention}, which underlie models like 
Llama~\cite{touvron2023llama}, ChatGPT~\cite{achiam2023gpt}, 
Claude~\cite{anthropic2024claude3}, Grok~\cite{xai2024grok1repo}, and 
DeepSeek~\cite{deepseek2024}, have shown promise for symbolic 
integration~\cite{lample2019deep}, linear algebra~\cite{charton2021linear}, 
and finding Lyapunov functions~\cite{alfarano2024global}.
But matrix functions present a different challenge: the output depends 
on the eigenstructure of the input, and the mapping from entries to 
entries is highly nonlinear.
Can neural networks learn these mappings accurately?

We study this question for five matrix functions (exponential, logarithm, 
sign, sine, and cosine) using both feedforward networks and transformers.
Our experiments show that standard deep networks struggle beyond $1 \times 1$ 
matrices, but transformer encoder-decoders with appropriate numerical 
encodings can approximate certain functions reasonably well.

\vspace{0.15cm}
\noindent\textbf{Summary of Contributions.} \label{subsec:contributions}
The contribution of this paper is twofold. From a theoretical perspective
we prove that a ReLU feedforward network can approximate $e^A$ to 
precision $\epsilon$, with width exponential in $nM$ and 
depth linear in $nM$ (Theorem~\ref{thm:dnn_matrix_exp}).
Moreover, numerically, we show that a transformer encoder-decoder with numerical 
encodings can approximate the matrix sign function with 
88.41\% accuracy for $3\times 3$ matrices 
at 1\% tolerance.

\vspace{0.15cm}
\noindent\textbf{Organization.} We start with Section~\ref{sec:preliminaries},
where we provide some preliminaries on matrix functions, transformers, and
the encoding scheme used in the experiments.
In Section~\ref{sec:theory}, we prove the theoretical result on the width and depth
of a ReLU feedforward network to approximate the matrix exponential.
In Section~\ref{sec:experiments}, we present the numerical results on
the transformer encoder-decoder with numerical encodings to approximate
the matrix sign function.
Finally, we conclude in Section \ref{sec:conclusions} with a discussion of
the results and future directions.

\section{Preliminaries}
\label{sec:preliminaries}
This section provides some preliminaries on matrix functions, transformers, and the encoding scheme used in the experiments.

\vspace{0.15cm}
\noindent\textbf{Matrix Functions.} Matrix functions extend scalar functions to matrices while preserving 
spectral properties.
Higham~\cite{higham2008functions} is a comprehensive reference for matrix functions.
We use the Jordan canonical form definition as, while theoretical,
it is a simple and intuitive definition that can be used to understand the behavior of matrix functions.

\begin{definition}[Matrix Function via Jordan Canonical Form]
\label{def:jordan}
Let $f$ be defined on the spectrum of $A \in \C^{n \times n}$ and let 
$A = ZJZ^{-1}$ where $J = \operatorname{diag}(J_1, \ldots, J_p)$ is 
the Jordan form. Then $f(A) := Z\operatorname{diag}(f(J_1), \ldots, f(J_p))Z^{-1}$,
where, for a Jordan block $J_k$ with eigenvalue $\lambda_k$ of size $m_k$, we define
\begin{equation*}
    f(J_k) := \begin{bmatrix} 
        f(\lambda_k) & f'(\lambda_k) & \cdots 
            & \frac{f^{(m_k-1)}(\lambda_k)}{(m_k-1)!} \\
        & f(\lambda_k) & \ddots & \vdots \\
        & & \ddots & f'(\lambda_k) \\
        & & & f(\lambda_k)
    \end{bmatrix}.
\end{equation*}
\end{definition}

One of the assumptions made in Definition~\ref{def:jordan} is that the
function $f$ is $m_k - 1$ times differentiable at the eigenvalue $\lambda_k$.
We will now briefly describe the five matrix functions used in the experiments:

\begin{itemize}
\item \textit{Matrix Exponential.} 
For $A \in \mathbb{C}^{n \times n}$, we have $e^A = \sum_{k=0}^{\infty} \frac{A^k}{k!}$.
This solves $\dot{x} = Ax$ with $x(t) = e^{At}x(0)$.

\item \textit{Matrix Logarithm.} 
For $A \in \mathbb{C}^{n \times n}$ with no eigenvalues on $\mathbb{R}^-$,
the principal logarithm satisfies $e^{\log(A)} = A$. When $\|A - I\| < 1$,
it can be computed via $\log(A) = \sum_{k=1}^{\infty} (-1)^{k+1} \frac{(A - I)^k}{k}$.
This is used in robotics in the context of interpolation of motion~\cite{alexa2002linear}.

\item \textit{Matrix Sign.} 
For $A \in \mathbb{C}^{n \times n}$ with no purely imaginary eigenvalues, 
write $A = Z\operatorname{diag}(J_1, J_2)Z^{-1}$ where eigenvalues 
of $J_1$ have negative real parts and those of $J_2$ positive real parts.
Then $\operatorname{sign}(A) = Z\operatorname{diag}(-I, I)Z^{-1}$.
This is a key function used in solving algebraic Riccati equations~\cite{roberts1980linear} and Lyapunov equations~\cite{higham2008functions}.

\item \textit{Matrix Sine and Cosine.} 
For $A \in \mathbb{C}^{n \times n}$, we have $\sin(A) = \sum_{k=0}^{\infty} (-1)^k \frac{A^{2k+1}}{(2k+1)!}$
and $\cos(A) = \sum_{k=0}^{\infty} (-1)^k \frac{A^{2k}}{(2k)!}$.
These are used in second-order differential equations~\cite{al2015new}.
\end{itemize}

\vspace{0.15cm}
\noindent\textbf{Transformer Architecture.}
\label{subsec:transformer}
Transformers~\cite{vaswani2017attention} process sequences via 
self-attention.
The key idea is that each position in the sequence can attend to all other 
positions, with learned weights that capture which inputs are relevant for 
producing each output.
For input $X \in \R^{n \times d}$, 
    $\mathrm{Attention}(Q, K, V) = \mathrm{softmax}\left(\frac{QK^T}{\sqrt{d_k}}\right)V$,
where $Q = XW_Q$, $K = XW_K$, $V = XW_V$ are linear projections of the input.
Here $Q$, $K$, $V$ are the queries, keys, and values: the query at each position 
is compared against all keys to determine how much weight to give each value.
Multi-head attention runs $h$ parallel operations:
$\mathrm{MultiHead}(Q, K, V) = \mathrm{Concat}(\text{head}_1, \ldots, \text{head}_h)W_O$.
The matrices $W_Q, W_K, W_V, W_O$ are learned parameters.

\vspace{0.15cm}
\noindent\textbf{Encoding Schemes for Numerical Data.}
\label{subsec:encoding}
Following Charton~\cite{charton2021linear}, we represent floating-point 
numbers in the sign, mantissa, and exponent form
as $x \approx s \cdot m \cdot 10^e$ where $s \in \{-1,1\}$, 
$m \in \{100,\ldots,999\}$, $e \in \mathbb{Z}$.
Table~\ref{tab:encodings} shows four schemes.

\begin{table}[t]
\centering
\begin{tabular}{|l|c|c|c|c|}
\hline
\rule{0pt}{2.6ex}\textbf{Encoding} & \textbf{3.14} & \textbf{$-6.02 \times 10^{23}$} 
    & \textbf{Tokens/coef.} & \textbf{Vocab size} \\
\hline
\rule{0pt}{2.6ex}P10   & $[+, 3, 1, 4, \text{E-2}]$ & $[-, 6, 0, 2, \text{E21}]$ 
    & 5 & 210 \\
P1000 & $[+, 314, \text{E-2}]$     & $[-, 602, \text{E21}]$     
    & 3 & 1100 \\
B1999 & $[314, \text{E-2}]$        & $[-602, \text{E21}]$       
    & 2 & 2000 \\
FP15  & $[\text{FP314/-2}]$        & $[\text{FP-602/21}]$       
    & 1 & 30000 \\
\hline
\end{tabular}
\vspace{0.08cm}
\caption{Encoding schemes for floating-point 
         numbers~\cite{charton2021linear}.}
\label{tab:encodings}
\end{table}

\section{Theoretical Results: DNN Bounds for Matrix Exponential}
\label{sec:theory}

We bound the architecture of ReLU Deep Neural Networks (DNNs) for approximating the matrix 
exponential.
The proof uses the Taylor expansion and the following lemma 
from Adcock et al.~\cite{adcock2025near} (based on arguments from Schwab et al.~\cite{schwab2019deep}).

\begin{lemma}[Approximate Multiplication by ReLU DNNs {\cite[Lemma 7.1]{adcock2025near}}]
\label{lem:multiplication}
Let $0 < \delta < 1$, $l \in \N$, and $M = \prod_{i=1}^{l} M_i \geq 1$.
There exists a ReLU DNN $\chi^{(l)}_{\delta}$ with
\[
    \sup_{|x_i| \leq M_i} 
        \left| \prod\limits_{i=1}^{l} x_i - \chi^{(l)}_{\delta}(\mathbf{x}) \right| 
        \leq \delta,
\]
width $\leq c_1 \cdot l$, and 
depth $\leq c_2(1 + \log(l)[\log(l\delta^{-1}) + \log(M)])$, where $c_1, c_2 > 0$ are universal constants.
\end{lemma}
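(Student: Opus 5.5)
The plan is the standard Yarotsky--Schwab--Zech construction: approximate squaring with a ReLU network, turn it into a two-factor multiplier by polarization, and combine two-factor multipliers in a balanced binary tree of depth $\lceil \log_2 l\rceil$. The main work is bookkeeping the error propagation through the tree so that the depth comes out as $\log(l)\,[\log(l\delta^{-1})+\log(M)]$.

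\emph{Step 1 (squaring).} Let $g(x)=2\,\mathrm{ReLU}(x)-4\,\mathrm{ReLU}(x-\tfrac12)+2\,\mathrm{ReLU}(x-1)$ be the hat function on $[0,1]$ and $g_s=g\circ\cdots\circ g$ ($s$ times) the sawtooth. Then $f_m(x)=x-\sum_{s=1}^m 4^{-s}g_s(x)$ is the piecewise linear interpolant of $x^2$ at $2^m+1$ equispaced nodes, so $\sup_{[0,1]}|x^2-f_m(x)|\le 2^{-2m-2}$. It is realized by a ReLU network of width $O(1)$ and depth $O(m)$, since the partial sums and the input can be carried forward with identity channels $x=\mathrm{ReLU}(x)-\mathrm{ReLU}(-x)$. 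To extend to $[-1,1]$, precompose with $|x|=\mathrm{ReLU}(x)+\mathrm{ReLU}(-x)$.

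\emph{Step 2 (two factors).} For $|x|\le B_1$ and $|y|\le B_2$, write
\[
xy=\tfrac{B}{2}\bigl(((x+y)/B)^2-(x/B)^2-(y/B)^2\bigr)\cdot B,
\]
with $B=B_1+B_2$, and replace each square by $f_m$. The error is at most $\tfrac32 B^2 2^{-2m-2}$. This gives a multiplier $\times_{\eta}$ with accuracy $\eta$, width $O(1)$ and depth $O(\log(B^2/\eta))$.

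\emph{Step 3 (tree).} Pad the input with ones up to $2^{\lceil\log_2 l\rceil}$ factors; this is harmless since the width stays $O(l)$. Then multiply pairwise level by level, running the $O(l)$ multipliers of a level in parallel, so the width is $\le c_1 l$. Inputs that are not multiplied at a level are passed through by identity channels.

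\emph{Main obstacle: error control across levels.} Let $P$ be an exact partial product at some node, bounded by the product of its $M_i$'s, and $\tilde P$ its computed value. Suppose every node's error is at most $1$. Then $|\tilde P|\le \prod M_i+1$, and an error entering at one child is amplified by at most the bound on the sibling's partial product. I would therefore take all multipliers with accuracy $\eta=\delta/(2l\,\widehat M)$, where $\widehat M=\prod_i\max(M_i,1)+1$, and use $B\le 2\widehat M$ as the input bound at every node.

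To keep $\widehat M$ comparable to $M$ when some $M_i<1$, I would first rescale. Set $x_i'=x_i/M_i'$ with $M_i'$ chosen so that each $M_i'\ge 1$ and $\prod M_i'=M$. For example, absorb the small factors into one large factor: since $M\ge 1$, this can be done while keeping $\prod M_i'=M$. The rescaling is free, since it is just a linear first layer.

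Unrolling the recursion, the total error is bounded by (number of nodes) $\times\,\eta\times\widehat M\le \delta$. A simple induction on the levels confirms the invariant that all intermediate errors stay $\le 1$. Each level then has depth $O(\log(\widehat M^2/\eta))=O(\log(l\delta^{-1})+\log M)$. With $\lceil\log_2 l\rceil$ levels, the total depth is $\le c_2(1+\log(l)[\log(l\delta^{-1})+\log(M)])$, where the additive $1$ covers $l=1$ and the rescaling layer.
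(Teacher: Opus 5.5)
The paper gives no proof of this lemma; it defers to Adcock et al., whose argument rests on the Schwab--Zech construction. Your proposal follows that same construction (sawtooth squaring, polarization, balanced binary tree of two-factor multipliers), so the approach is right. The one difference is how you handle general $M_i$. The usual reduction maps $x_i\mapsto x_i/M_i\in[-1,1]$, builds the tree on $[-1,1]$ with target accuracy $\delta/M$, and multiplies the output by $M$. On $[-1,1]$, with computed values kept in $[-1,1]$ (e.g.\ by the ReLU clip $t\mapsto \mathrm{ReLU}(t+1)-\mathrm{ReLU}(t-1)-1$), each multiplication is non-expansive. Node errors then simply add, and no amplification needs tracking. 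Your redistribution with, say, $M_1'=M$ and $M_i'=1$ otherwise is essentially the same reduction, with the factor $M$ carried inside the tree rather than applied at the end.

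Two steps need repair.

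First, the rescaling as written, $x_i'=x_i/M_i'$, neither preserves the product nor gives $|x_i'|\le M_i'$. It should be $x_i'=(M_i'/M_i)\,x_i$, so that $|x_i'|\le M_i'$ and $\prod_i x_i'=\prod_i x_i$.

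Second, the invariant ``every node's error is at most $1$'' does not by itself give the bound (number of nodes)$\times\eta\times\widehat M$. An error created low in the tree is multiplied at each of up to $\lceil\log_2 l\rceil$ ancestors by a \emph{computed} sibling value. Writing $\Pi_w\ge 1$ for the product of the $M_i'$ below $w$, the invariant only bounds that value by $\Pi_w+1\le 2\Pi_w$. So the naive amplification is of order $lM$ rather than $M+1$, and with your $\eta$ this gives total error only $O(l\delta)$.

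Your bound is nevertheless true if you track relative errors $r_v=|e_v|/\Pi_v$. Since $\Pi_v\ge 1$, you get $r_v\le \eta+r_a+r_b+r_ar_b$, i.e.\ $1+r_v\le(1+\eta)(1+r_a)(1+r_b)$. Hence
$|e_{\mathrm{root}}|\le M\bigl((1+\eta)^N-1\bigr)\le MN\eta/(1-N\eta)\le N\eta(M+1)\le\delta$,
using $N<2l$ internal nodes and $N\eta<\delta/(M+1)$. The same estimate at intermediate nodes gives the invariant $|e_v|<1$. Alternatively, shrink $\eta$ by a further factor $l$; this only changes the constant in front of $\log(l\delta^{-1})$ in the depth.
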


Lemma~\ref{lem:matrix_power} follows from repeated application of the definition of matrix multiplication: 
each entry $(A^k)_{ij}$ is a sum over all index paths from $i$ to $j$ of length $k$.
This can be proved by a standard induction argument, see Padmanabhan~\cite{padmanabhan2025deep}.
\begin{lemma}[Matrix Power Representation]
\label{lem:matrix_power}
For $A \in \C^{n \times n}$ and $k \in \N$, the entries of $A^k$ satisfy:
    $(A^k)_{ij} = \sum\limits_{\ell_1=1}^{n} \cdots \sum\limits_{\ell_{k-1}=1}^{n} 
                 \prod\limits_{q=1}^{k} a_{\ell_{q-1} \ell_q}$,
where $\ell_0 = i$ and $\ell_k = j$.
\end{lemma}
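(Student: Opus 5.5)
We prove Lemma~\ref{lem:matrix_power} by induction on $k$, using only the definition of the matrix product $(BC)_{ij} = \sum_{\ell=1}^n b_{i\ell} c_{\ell j}$.

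\emph{Base case.} For $k=1$ there are no summation indices, since the sums over $\ell_1,\dots,\ell_{k-1}$ are empty. The product consists of the single factor $a_{\ell_0\ell_1} = a_{ij}$, because $\ell_0=i$ and $\ell_1=j$. This equals $(A^1)_{ij}$.

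\emph{Inductive step.} Assume the formula holds for some $k\ge 1$ and all index pairs. Write $A^{k+1} = A^k A$. By the definition of the product,
\begin{equation*}
(A^{k+1})_{ij} = \sum_{m=1}^{n} (A^k)_{im}\, a_{mj}.
\end{equation*}
Apply the inductive hypothesis to $(A^k)_{im}$ with endpoints $\ell_0=i$ and $\ell_k=m$. This gives a nested sum over $\ell_1,\dots,\ell_{k-1}$ of $\prod_{q=1}^{k} a_{\ell_{q-1}\ell_q}$. Rename the outer index $m$ as $\ell_k$; it now ranges over $\{1,\dots,n\}$ rather than being fixed. Set $\ell_{k+1}=j$.

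Since the sums are finite, we may distribute $a_{\ell_k j}$ into the inner sums. The expression becomes
\begin{equation*}
\sum_{\ell_1=1}^n\cdots\sum_{\ell_k=1}^n \prod_{q=1}^{k+1} a_{\ell_{q-1}\ell_q},
\end{equation*}
which is exactly the claimed formula for $k+1$.

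The only point needing care is the bookkeeping in this step. The former fixed endpoint $\ell_k$ becomes a free summation index, so the number of nested sums grows from $k-1$ to $k$. The extra factor $a_{\ell_k\ell_{k+1}}$ extends the product range to $q=k+1$. Commutativity and associativity of addition and multiplication in $\C$ justify reordering the finite sums and absorbing the factor. Because no convergence issues arise, this is the only delicate step.

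As a remark, the formula says that $(A^k)_{ij}$ is a sum of $n^{k-1}$ monomials, each a product of $k$ entries of $A$. This is the count later needed to combine Lemma~\ref{lem:multiplication} (with $l=k$) with a truncated Taylor series.
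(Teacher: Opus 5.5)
Your induction on $k$ via $A^{k+1}=A^kA$ is correct. At the key step, the fixed endpoint $\ell_k$ becomes a new free summation index and the product gains the factor $a_{\ell_k\ell_{k+1}}$; this is exactly the standard induction argument from the definition of matrix multiplication that the paper cites (deferring details to Padmanabhan's thesis), and starting at $k=1$ is the right choice since the formula is not meaningful for $k=0$.
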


We now state our main theoretical result, which gives explicit bounds on the 
width and depth of a ReLU network that approximates the matrix exponential 
to arbitrary precision $\epsilon$.
The width grows like $K \cdot n^K \approx Mn \cdot n^{nM}$, 
exponential in $nM$.
The depth grows as $K\ln(K)$ times logarithmic factors, 
roughly linear in $Mn$. The complete proof can be found in Padmanabhan~\cite{padmanabhan2025deep}.

\begin{theorem}[DNN Architecture for Matrix Exponential]
\label{thm:dnn_matrix_exp}
Let $n \in \N$ and $M \geq 1$.
For any $\epsilon > 0$, there exists a ReLU network $f_\epsilon$ with
\begin{equation*}
    \sup_{A \in [-M,M]^{n \times n}} \|f_\epsilon(A) - e^A\|_F \leq \epsilon,
\end{equation*}
and bounds
\begin{align*}
    \operatorname{width}(f_\epsilon) 
        &\leq C_1 \cdot K \cdot n^{K}, \\
    \operatorname{depth}(f_\epsilon) 
        &\leq C_2 \left[1 + \ln(K)\left(\ln(K) 
              + \ln\left(\tfrac{2e}{\epsilon}\right) 
              + K(\ln(n) + \ln(M))\right)\right],
\end{align*}
where 
    $K = \left\lceil \max \left\{ enM, 
        \frac{nM + \ln(\frac{\sqrt{2}}{\sqrt{\pi}\epsilon})}{\ln(2)} 
        - 1 \right\} \right\rceil$,
$\|\cdot\|_F$ denotes the Frobenius norm, and $C_1, C_2 > 0$ are universal constants.
\end{theorem}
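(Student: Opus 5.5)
We build $f_\epsilon$ by truncating the Taylor series of the exponential and replacing every monomial in the entries of $A$ with the approximate multiplication network of Lemma~\ref{lem:multiplication}. The error splits into a truncation part and a network part, each to be made at most $\epsilon/2$.

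\textbf{Step 1: truncation error.} Set $T_K(A) = \sum_{k=0}^{K} A^k/k!$. Since $\|A\|_2 \le \|A\|_F \le nM$ for $A \in [-M,M]^{n\times n}$, the tail satisfies
\[
\|e^A - T_K(A)\|_F \le \sqrt{n}\sum_{k>K} \frac{(nM)^k}{k!}.
\]
Alternatively, we can bound the tail directly in Frobenius norm, using submultiplicativity of $\|\cdot\|_F$, by $\sum_{k>K}(nM)^k/k!$. Stirling gives $k! \ge \sqrt{2\pi k}\,(k/e)^k$. For $k \ge K+1 \ge enM$ this yields $(nM)^k/k! \le (enM/k)^k/\sqrt{2\pi k} \le 1/\sqrt{2\pi k}$. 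The condition $K \ge enM$ serves exactly this purpose. It also makes the terms decay at least geometrically with ratio $1/2$ once $k \ge 2nM$, so the tail is at most about twice its first term.

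To reach the stated threshold $K+1 \ge (nM + \ln(\sqrt2/(\sqrt\pi\epsilon)))/\ln 2$, I would instead use the cruder estimate $(nM)^k/k! \le e^{nM}\,2^{-k}$-type bounds. Concretely, I expect the intended computation to be
\[
\sum_{k>K} \frac{(nM)^k}{k!} \le \frac{e^{nM}}{\sqrt{2\pi}}\,2^{-(K+1)}\cdot 2.
\]
Setting this $\le \epsilon/2$ and solving for $K$ gives exactly the stated form, with $\sqrt{2}/\sqrt{\pi}$ coming from $2/\sqrt{2\pi}$. I will reverse-engineer the constants to match. This bookkeeping is the main fiddly obstacle. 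The conceptual content is just that the tail is at most $\epsilon/2$ for the chosen $K$.

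\textbf{Step 2: network construction.} By Lemma~\ref{lem:matrix_power}, each entry of $T_K(A)$ is
\[
\sum_{k=0}^K \frac{1}{k!}\sum_{\text{paths}} \prod_{q=1}^k a_{\ell_{q-1}\ell_q},
\]
which involves at most $n^{k-1}$ products of $k$ entries, each bounded by $M$. We realize each product of length $k \le K$ by $\chi^{(k)}_\delta$ with all $M_i = M$, so the product bound is $M^k \le M^K$. A product of length $k<K$ can be padded to length $K$ with constant inputs $1$, or handled by identity layers, so all subnetworks share a common depth. The subnetworks are placed in parallel. A final linear layer forms the weighted sums with coefficients $1/k!$ and outputs the $n^2$ entries.

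\textbf{Step 3: counting and error.} The parallel stacking gives width $\lesssim n^2\sum_k n^{k-1}\,c_1 k \le C_1 K n^K$, after absorbing $n^2$ and the geometric sum. The per-entry error is at most
\[
\sum_k \frac{n^{k-1}\delta}{k!} \le e^n\delta/n,
\]
and the Frobenius error is at most $n$ times that. Choosing $\delta = \epsilon/(2e\cdot \text{const}\cdot n^{K})$, or a similar choice, makes this at most $\epsilon/2$. Substituting into the depth bound of Lemma~\ref{lem:multiplication} with $l = K$, $\log(M_{\rm prod}) = K\ln M$ and $\ln(1/\delta) \approx \ln(2e/\epsilon) + K\ln n$ gives
\[
C_2\bigl[1 + \ln K\,(\ln K + \ln(2e/\epsilon) + K(\ln n + \ln M))\bigr],
\]
as stated. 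The triangle inequality then combines the two $\epsilon/2$ bounds. Besides the constant-matching in Step 1, the remaining care point is that the $\delta$ choice must keep the $n^K$ factor inside the logarithm, which produces the $K\ln n$ term.
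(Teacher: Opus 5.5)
There is a genuine gap in your Step~1. Your Steps~2--3 follow the paper's construction: path expansion via Lemma~\ref{lem:matrix_power}, parallel copies of $\chi^{(k)}_\delta$, a linear output layer, network error at most $e^n\delta$, and an $\epsilon/2+\epsilon/2$ split. But in Step~1 you never derive the tail bound. You posit $\sum_{k>K}(nM)^k/k!\le \frac{2e^{nM}}{\sqrt{2\pi}}2^{-(K+1)}$ and propose to ``reverse-engineer'' constants, relying only on $K\ge enM$. Under that hypothesis the inequality is false:
\begin{itemize}
\item Take $n=5$, $M=1$, $\epsilon=0.004$. The displayed formula then gives $K=14\ge 5e$.
\item For the all-ones matrix $A$ one has $A^k=5^{k-1}A$, so $\|R_{14}(A)\|_F=\sum_{k\ge 15}5^k/k!\approx 0.034$.
\item Your right-hand side is only about $0.0036$, and the truncation error alone exceeds $\epsilon$ roughly eightfold.
\end{itemize}
The pointwise estimate $(nM)^k/k!\le e^{nM}2^{-k}$ you allude to also fails, e.g.\ for $nM=2$, $k=4$.

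What you need is the paper's Step~1 bound, which the paper states for $K\ge 2enM-1$:
\[
\Bigl\|\sum_{k\ge K+1}\frac{A^k}{k!}\Bigr\|_F\le \frac{(nM)^{K+1}}{(K+1)!}\,e^{nM}\le \frac{1}{\sqrt{2\pi}}\Bigl(\frac{enM}{K+1}\Bigr)^{K+1}e^{nM}\le \frac{1}{\sqrt{2\pi}}\Bigl(\frac{1}{2}\Bigr)^{K+1}e^{nM}.
\]
This follows from $(K+1)!/(K+1+j)!\le 1/j!$, then Stirling, and then the condition $K+1\ge 2enM$, which is what makes the base at most $1/2$. Note that this condition is stronger than the $enM$ in the displayed $K$; the example above shows $enM$ alone does not suffice for this construction. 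There is no extra factor $2$ here: setting the bound $\le\epsilon/2$ gives exactly $\ln\bigl(\sqrt2/(\sqrt\pi\epsilon)\bigr)$. Your factor $2$ would give $\ln\bigl(2\sqrt2/(\sqrt\pi\epsilon)\bigr)$ instead, so your claimed match is also an arithmetic slip. Your first, geometric-ratio route would close the gap too, once you impose $K+1\ge 2enM$.

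Separately, ``absorbing $n^2$'' in the width count is not legitimate. The $k=K$ term alone contributes $n^2\cdot n^{K-1}\cdot c_1K=c_1Kn^{K+1}$, and the leftover factor $n$ is not a universal constant. As written, your construction gives width $O(Kn^{K+1})$ rather than $C_1Kn^K$; the paper's sketch does not spell this count out either. Your choice $\delta\propto\epsilon/n^K$ is fine for the depth. The paper's $\delta=\epsilon/(2e^n)$ yields the same form, since $\ln(2e^n/\epsilon)=\ln(2e/\epsilon)+n-1$ and $n-1\le K\ln n$.
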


\begin{proof}[Sketch]
Here we provide a sketch of the proof.

\vspace{0.25cm}
\noindent\textbf{Step 1: Taylor truncation.}
Write $e^A = \sum\limits_{k=0}^{K} \frac{A^k}{k!} + R_K(A)$ where
\[
    \|R_K(A)\|_F \leq \frac{1}{\sqrt{2\pi}}\left(\frac{1}{2}\right)^{K+1}e^{nM}
\]
for $K \geq 2enM - 1$.

\vspace{0.25cm}
\noindent\textbf{Step 2: Remainder bound.}
Requiring $\|R_K(A)\|_F \leq \epsilon/2$ gives the condition
$K \geq \max\left\{enM, \frac{nM + \ln\left(\frac{1}{\sqrt{2\pi}\epsilon/2}\right)}{\ln(2)} - 1\right\}$.

\noindent\textbf{Step 3: Network construction.}
By Lemma~\ref{lem:matrix_power}, each $(A^k)_{ij}$ is a sum of $n^{k-1}$ 
products of $k$ terms.
Using Lemma~\ref{lem:multiplication}, we build networks $\mathcal{P}^{(k)}$ 
approximating $A^k$.
The network $\Phi(A) = \sum\limits_{j=0}^{K} \frac{1}{j!}\mathcal{P}^{(j)}(A)$ 
satisfies $\|\Phi(A) - \sum\limits_{j=0}^{K}\frac{A^j}{j!}\|_F \leq \delta e^n$ 
with $\delta = \epsilon/(2e^n)$.
\hfill $\square$
\end{proof}

\section{Numerical Experiments}
\label{sec:experiments}

We now run experiments with the objective of approximating the five matrix functions as
referred to in Section~\ref{sec:preliminaries}.
We test four architectures:
(1) shallow neural network with 3 hidden layers,
(2) deep neural network with 7 hidden layers,
(3) transformer encoder with Fourier features based on the work of Tancik et al.~\cite{tancik2020fourier}, and
(4) transformer encoder-decoder with numerical encodings based on the work of Charton~\cite{charton2021linear}.
The next section details the experimental setup for the baseline methods and the transformer encoder-decoder with numerical encodings.
Our experiments can be reproduced using the code available at \url{www.github.com/rahul3/LAWT}.

\subsection{Experimental Setup}
The baseline methods were chosen to compare the performance of more general neural network architectures
and an encoder only transformer architecture with the transformer encoder-decoder with numerical encodings.
Across all methods we use a tolerance-based accuracy metric to evaluate the performance of the models which is defined as follows:
\begin{definition}[Tolerance-Based Accuracy Metric] \label{def:tolerance_based_accuracy}
For tolerance $\tau$, let
\begin{equation}
    \mathrm{Accuracy}(\tau) = \frac{1}{N_{\mathrm{eval}}} \sum\limits_{i=1}^{N_{\mathrm{eval}}} \mathbf{1}\left[ \frac{\sum_{j,k} |(\hat{Y}_i)_{jk} - (Y_i)_{jk}|}{\sum_{j,k} |(Y_i)_{jk}| + \epsilon} < \tau \right]
\end{equation}
where $N_{\mathrm{eval}}$ is the number of evaluation samples, $Y_i$ is the true output, 
$\hat{Y}_i$ is the predicted output, $\epsilon$ is some small positive constant for numerical stability.
\end{definition}
In our experiments, we use $\epsilon = 10^{-7}$ and $\tau \in \{0.05, 0.02, 0.01, 0.005\}$.

\subsubsection{Baseline Methods.}
We used the following as the baseline methods:
\begin{itemize}
    \item The shallow neural network has hidden layers of size 128, 256, and 128 with ReLU activation function.
    \item The deep neural network has hidden layers of size 128, 256, 512, 1024, 512, 256, and 128 with ReLU activation function and dropout 0.2.
    \item The transformer encoder with Fourier features has 2, 4, 8, and 16 layers with $d^2$ attention heads where $d$ is the dimension of the matrix.
\end{itemize}
The training data for all cases above is sampled from a Gaussian distribution in $[-5,5]$ and
Adam~\cite{kingma2014adam} is used as the optimizer with a learning rate of $10^{-3}$.
The amount of training samples was varied from $2^5$ to $2^{18}$ in all the baseline methods.
The shallow and deep neural networks are trained for 100 epochs with a batch size of 128.
The transformer encoder with Fourier features is trained for 600 epochs with a batch size of 64
to give it enough time to learn the underlying structure of the data.
In the case of the transformer encoder with Fourier features, we follow the approach of
Tancik et al.~\cite{tancik2020fourier} where the inputs are mapped via 
$\gamma(x) = [\cos(2\pi Bx), 
                       \sin(2\pi Bx)]$,
where $B_{ij} \sim \mathcal{N}(0, \sigma^2)$ are independent.
We test 2, 4, 8, and 16 layers with up to $d^2$ heads where $d$ is the dimension of the matrix. 

\paragraph{Baseline Method Loss Functions.} \label{subsec:loss_functions}
The loss functions are selected based on the function being approximated.
For the exponential operation, due to the dynamic range of the output,
we use the $\ell_1$ relative error loss $\mathcal{L}_{\text{rel}} = \mathbb{E}[\|\hat{Y} - Y\|_1 / (\|Y\|_1 + \epsilon)]$ where $\epsilon=10^{-7}$,
to account for the large dynamic range of exponential outputs.
For logarithm, sign, sine, and cosine operations, we employ the Frobenius norm
loss $\mathcal{L}_{\text{Frob}} = \|Y - \hat{Y}\|_F$ for matrices ($d > 1$) and
mean squared error for scalars ($d = 1$).
The relative error formulation for exponentials ensures scale-invariant learning,
while the Frobenius norm naturally extends Euclidean distance to matrix spaces.

\paragraph{Transformer Encoder-Decoder.}
The main architecture has 8 encoder layers, 1 decoder layer, 
8 attention heads, embedding dimension 512.
We train for 100 epochs, using 300K samples/epoch, batch size 64, 
Adam with warmup (10,000 steps), peak learning rate of $10^{-4}$.
Matrix entries for the random matrices considered are sampled from a normal distribution, then clipped to the interval $[-5,5]$.

\subsection{Baseline Results}
The baseline results are shown in Figures~\ref{fig:snn_max_accuracy_by_dimension} and~\ref{fig:dnn_max_accuracy_by_dimension}
for the 3 layer shallow neural network and the 7 layer deep neural network respectively.
In the case of the Fourier-based transformer encoder, we start with the evaluation of dimension 3
and find that the results are unsatisfactory with approximately 0\% accuracy across all functions.
The highest tolerance-based accuracy is 0.256\% for the matrix sign function at dimension 3 at a tolerance of 0.05
when the Fourier-based transformer encoder has 4 layers.
This motivates the use of the encoder-decoder with numerical encodings, which is the main focus of the next section.

\begin{figure}[htbp]
    \centering
    \begin{subfigure}[b]{0.48\textwidth}
        \centering
        \includegraphics[width=\textwidth]{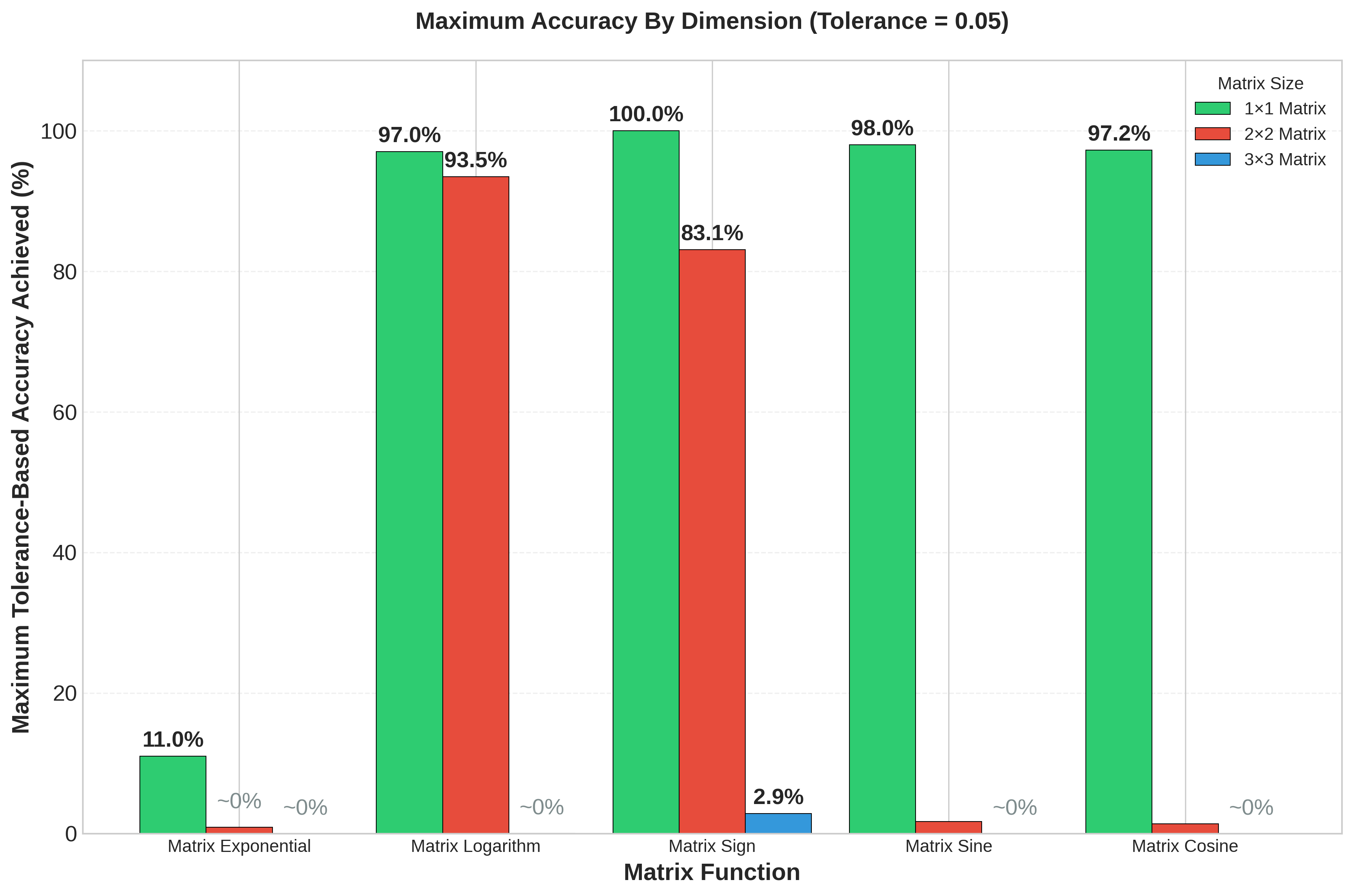}
        \caption{3-layer shallow network}
        \label{fig:snn_max_accuracy_by_dimension}
    \end{subfigure}
    \hfill
    \begin{subfigure}[b]{0.48\textwidth}
        \centering
        \includegraphics[width=\textwidth]{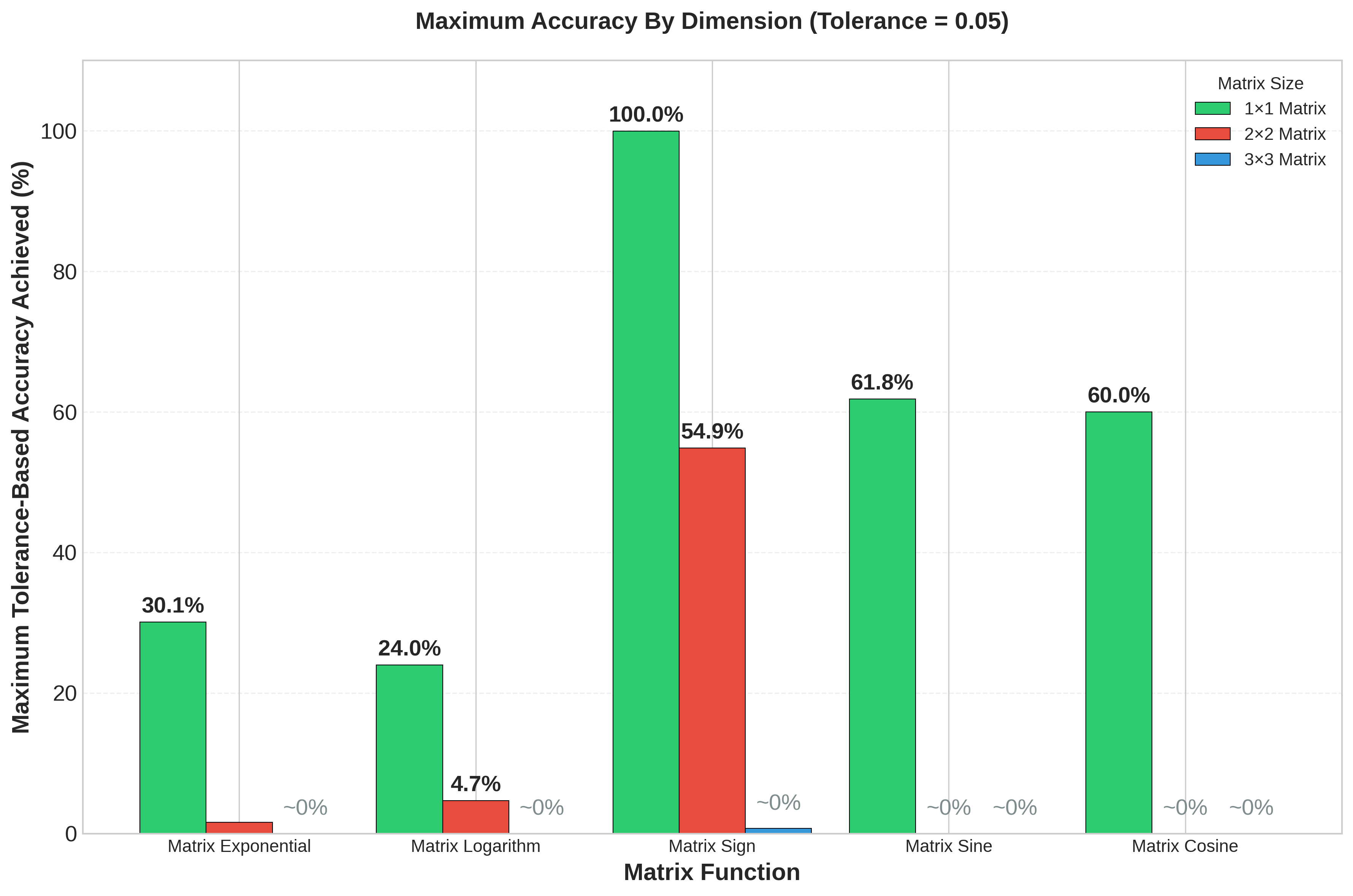}
        \caption{7-layer deep network}
        \label{fig:dnn_max_accuracy_by_dimension}
    \end{subfigure}
    \caption{Maximum tolerance-based accuracy decreases as matrix dimension increases. From dimension 3 onwards, accuracy is less than 3\% for all functions. Experiments were run up to dimension 8.}
    \label{fig:baseline_accuracy}
\end{figure}

\FloatBarrier

\subsection{Transformer Encoder-Decoder Results}
Our main numerical result is that the transformer encoder-decoder with the FP15 encoding approximates
the matrix sign function at 88.41\% accuracy at a tolerance of 0.01 for $3\times 3$ matrices.
Tables~\ref{tab:dim3_results} and~\ref{tab:dim5_results} report 
accuracy at several tolerance levels for $3\times 3$ and $5\times 5$ matrices respectively.
\begin{table}[t!]
    \centering
    \footnotesize
    \setlength{\tabcolsep}{3pt}
    \begin{tabular}{|l|c|c|c|c|c|}
    \hline
    \textbf{Operation} & \textbf{Encoding} & \multicolumn{4}{c|}{\textbf{Tolerance-Based Accuracy}} \\
    \cline{3-6}
    & & \textbf{tol=0.05} & \textbf{tol=0.02} & \textbf{tol=0.01} & \textbf{tol=0.005} \\
    \hline
    Exponential & P10 & 85.6\% & 24.18\% & 2.48\% & 0.13\% \\
    Logarithm & P10 & 93.09\% & 86.56\% & 74.96\% & 46.92\% \\
    Sign & P10 & 95.84\% & 87.58\% & 61.36\% & 27.26\% \\
    Sine & P10 & 2.92\% & 1.39\% & 0.23\% & 0.02\% \\
    Cosine & P10 & 5.16\% & 1.17\% & 0.11\% & 0.00\% \\
    \hline
    Exponential & P1000 & 88.92\% & 37.84\% & 9.9\% & 1.21\% \\
    Logarithm & P1000 & 92.52\% & 85.13\% & 72.09\% & 46.29\% \\
    Sign & P1000 & 96.26\% & 91.28\% & 76.53\% & 42.61\% \\
    Sine & P1000 & 5.13\% & 2.3\% & 0.19\% & 0.02\% \\
    Cosine & P1000 & 10.43\% & 5.28\% & 0.64\% & 0.00\% \\
    \hline
    Exponential & FP15 & 0.00\% & 0.00\% & 0.00\% & 0.00\% \\
    Logarithm & FP15 & 0.00\% & 0.00\% & 0.00\% & 0.00\% \\
    Sign & FP15 & \textbf{97.0\%} & \textbf{94.5\%} & \textbf{88.41\%} & \textbf{67.52\%} \\
    Sine & FP15 & 0.00\% & 0.00\% & 0.00\% & 0.00\% \\
    Cosine & FP15 & 0.00\% & 0.00\% & 0.00\% & 0.00\% \\
    \hline
    Exponential & B1999 & 99.58\% & 74.54\% & 24.79\% & 2.46\% \\
    Logarithm & B1999 & 93.02\% & 79.41\% & 55.68\% & 19.34\% \\
    Sign & B1999 & 44.85\% & 28.0\% & 21.26\% & 20.69\% \\
    Sine & B1999 & 0.00\% & 0.00\% & 0.00\% & 0.00\% \\
    Cosine & B1999 & 0.04\% & 0.00\% & 0.00\% & 0.00\% \\
    \hline
    \end{tabular}
    \caption{Accuracy results for different matrix functions and encodings on $3\times 3$ matrices across various error tolerances.}
    \label{tab:dim3_results}
\end{table}
\begin{table}[t!]
    \centering
    \footnotesize
    \setlength{\tabcolsep}{3pt}
    \begin{tabular}{|l|c|c|c|c|c|}
    \hline
    \textbf{Operation} & \textbf{Encoding} & \multicolumn{4}{c|}{\textbf{Tolerance-Based Accuracy}} \\
    \cline{3-6}
    & & \textbf{tol=0.05} & \textbf{tol=0.02} & \textbf{tol=0.01} & \textbf{tol=0.005} \\
    \hline
    Exponential & P10 & 0.00\% & 0.00\% & 0.00\% & 0.00\% \\
    Logarithm & P10 & 0.00\% & 0.00\% & 0.00\% & 0.00\% \\
    Sign & P10 & 0.00\% & 0.00\% & 0.00\% & 0.00\% \\
    Sine & P10 & 0.00\% & 0.00\% & 0.00\% & 0.00\% \\
    Cosine & P10 & 0.00\% & 0.00\% & 0.00\% & 0.00\% \\
    \hline
    Exponential & P1000 & 68.04\% & 11.24\% & 0.64\% & 0.00\% \\
    Logarithm & P1000 & 69.08\% & 31.89\% & 2.62\% & 0.00\% \\
    Sign & P1000 & 1.89\% & 1.89\% & 1.89\% & 1.89\% \\
    Sine & P1000 & 0.00\% & 0.00\% & 0.00\% & 0.00\% \\
    Cosine & P1000 & 0.00\% & 0.00\% & 0.00\% & 0.00\% \\
    \hline
    Exponential & FP15 & 0.00\% & 0.00\% & 0.00\% & 0.00\% \\
    Logarithm & FP15 & 0.00\% & 0.00\% & 0.00\% & 0.00\% \\
    Sign & FP15 & 1.19\% & 1.19\% & 1.19\% & 1.19\% \\
    Sine & FP15 & 0.00\% & 0.00\% & 0.00\% & 0.00\% \\
    Cosine & FP15 & 0.00\% & 0.00\% & 0.00\% & 0.00\% \\
    \hline
    Exponential & B1999 & 93.86\% & 32.17\% & 1.75\% & 0.00\% \\
    Logarithm & B1999 & 84.31\% & 57.03\% & 13.06\% & 0.03\% \\
    Sign & B1999 & 1.01\% & 1.01\% & 1.01\% & 1.01\% \\
    Sine & B1999 & 0.00\% & 0.00\% & 0.00\% & 0.00\% \\
    Cosine & B1999 & 0.00\% & 0.00\% & 0.00\% & 0.00\% \\
    \hline
    \end{tabular}
    \caption{Accuracy results for different matrix functions and encodings on $5\times 5$ matrices across various error tolerances.}
    \label{tab:dim5_results}
\end{table}
We find that the choice of encoding matters significantly. The FP15 encoding performs well for the matrix sign function
and the P10 encoding performs well at an accuracy of 74.96\% for the matrix logarithm function 
at a tolerance of 0.01 for $3\times 3$ matrices. The FP15 has the highest vocabulary size (30,000 words)
and the P10 has the lowest vocabulary size (210 words) in our experiments.
Every encoding fails to approximate the matrix sine and cosine functions.
As we have used $[-5, 5]$ as the range for the matrix entries, the matrix exponential has an extreme dynamic range.
Our baseline methods of the 3 and 7 layer neural networks were unable to approximate the matrix exponential
function for $3\times 3$ matrices at a tolerance of 0.05, higher than the tolerance of
0.01 which we use as a reasonable benchmark. However, we were able to approximate the matrix exponential
function for $3\times 3$ and $5\times 5$ matrices at a tolerance of 0.05 with the transformer encoder-decoder
with the B1999 encoding at a tolerance-based accuracy of 99.58\% and 93.86\% respectively which is a significant improvement over the baseline methods.

\section{Conclusions}
\label{sec:conclusions}

We presented two results on neural networks for matrix function 
approximation.
Theorem~\ref{thm:dnn_matrix_exp} shows ReLU DNNs can approximate 
the matrix exponential with width exponential in $nM$ and depth 
linear up to log factors. Despite the positive result in Theorem~\ref{thm:dnn_matrix_exp},
numerical experiments with ReLU DNNs were not satisfactory.
This gap is not surprising given that the architecture from the theorem was not implemented in the numerical experiments.
Moreover, the theorem is an existence result and does not involve training.
The main numerical result from Table~\ref{tab:dim3_results} shows that transformer encoder-decoders with 
numerical encodings can approximate the matrix sign function well for $3\times 3$ matrices at a tolerance of 0.01.
This is an improvement over the baseline methods, which were unable to approximate the matrix
functions used in the experiments for $3\times 3$ matrices.
Our results on approximating the matrix sign function connect to 
recent work on transformers for Lyapunov functions by Alfarano et al.~\cite{alfarano2024global}.
While that work targets symbolic solutions for nonlinear systems, our approach addresses numerical computation for linear systems.
Both directions point to the potential of transformers in stability analysis and control theory.
Finally, the strong dependence on encoding scheme suggests that numerical 
representations designed for specific problems could help.

\subsubsection{Future Work.}
Several questions remain open.
First, extending these results to larger matrices (dimension 10 or higher) is important for
practical applications but requires new ideas, given the sharp accuracy drop from dimension 3 to 5.
Second, the complete failure on matrix sine and cosine is not yet understood;
the oscillatory nature of these functions may pose a distinct challenge.
Third, while we provide approximation bounds for ReLU networks, theoretical analysis of transformer architectures for matrix functions is lacking.
Fourth, the strong dependence on encoding scheme is purely empirical.
Developing principled guidelines for selecting or designing encodings based on the target
function would be valuable.

\begin{credits}
\subsubsection{\ackname} 
This work utilized the computational resources provided by the Digital Research Alliance of Canada (DRAC).
SB acknowledges the support of NSERC through grant RGPIN-2020-06766.

\subsubsection{\discintname}
The authors have no competing interests to declare that are relevant to the content of this article.
\end{credits}


\bibliographystyle{splncs04}
\bibliography{mybibliography}

\end{document}